\definecolor{citecolor}{RGB}{34,139,34}      
\newcommand\semiScriptsize{\@setfontsize\semiScriptsize\@ixpt\@ixpt}
\title{Distilling Out-of-Distribution Robustness from Vision-Language Foundation Models}
\newcommand{\mP}{\mathbf{P}}
\newcommand{\rw}{r_\textnormal{worst}}
\newtheorem*{theorem*}{Theorem}
\newtheorem{theorem}{Theorem}[section]
\newtheorem{lemma}[theorem]{Lemma}
\author[1, 3]{Andy Zhou}
\author[2]{Jindong Wang}
\author[1]{Yu-Xiong Wang}
\author[1]{Haohan Wang}
\affil[1]{University of Illinois at Urbana-Champaign}
\affil[2]{Microsoft Research}
\affil[3]{Lapis Labs}
\begin{document}

\maketitle

{\vspace{-0.4in}
    \centering \small
    \texttt{\{andyz3, yxw, haohanw\}@illinois.edu, jindong.wang@microsoft.com}
    \par
    \vspace{0.2in}
}

\begin{abstract}
  We propose a conceptually simple and lightweight framework for improving the robustness of vision models through the combination of knowledge distillation and data augmentation. We address the conjecture that larger models do not make for better teachers by showing strong gains in out-of-distribution robustness when distilling from pretrained foundation models. Following this finding, we propose Discrete Adversarial Distillation (DAD), which leverages a robust teacher to generate adversarial examples and a VQGAN to discretize them, creating more informative samples than standard data augmentation techniques. We provide a theoretical framework for the use of a robust teacher in the knowledge distillation with data augmentation setting and demonstrate strong gains in out-of-distribution robustness and clean accuracy across different student architectures. Notably, our method adds minor computational overhead compared to similar techniques and can be easily combined with other data augmentations for further improvements.
\end{abstract}

\section{Introduction}
One of the goals of machine learning is to develop systems that can generalize effectively across diverse populations and environments, much like human intelligence. Despite the impressive advancements in neural networks that surpass human performance in various tasks in computer vision, their generalization capabilities remain inadequate when faced with out-of-distribution data, such as adversarial perturbations \citep{szegedy2019intriguing}, unusual colors and textures \citep{geirhos2019imagenet, shamsabadi2020colorfool, wang2019learning}, or challenging contexts \citep{hendrycks2021natural}.

One major line of research addresses this issue with more sophisticated training strategies \citep{liu2023towards}, including adversarial training \citep{madry2018towards}, data augmentation \citep{ghiasi2021augmax, hendrycks2019augmix, zhang2018mixup}, or other regularizations \citep{kim2023feature, qin2019adversarial, tack2022consistency, wu2020adversarial,wang2022toward2}. In this paper, we focus on adversarial-training-based data augmentation \citep{yang2022image}, which can enhance the quantity and diversity of training data. In addition, theoretical work suggests achieving high robustness requires significantly more samples than clean accuracy \citep{schmidt2018adversarially}. This has also been shown empirically \citep{carmon2019unlabeled, gowal2021improving}, most recently with transformers \citep{dosovitskiy2020image} which achieve robustness on a variety of computer vision tasks. In addition to weak inductive bias, powerful model capacity, and grounding with language, these models are often trained with large-scale datasets \citep{dehghani2023scaling, dosovitskiy2020image, radford2021learning, https://doi.org/10.48550/arxiv.2205.01917}, up to billions of images \citep{schuhmann2022laion5b} that encompass many real-world distribution shifts. As a result, these foundation models \citep{bommasani2022opportunities} exhibit remarkable zero-shot generalization, especially on natural distribution shifts such as artistic renderings, but require large amounts of compute and heavily parameterized models.

In this paper, we aim to connect these two lines of work. We investigate if it is possible to improve robustness by introducing a foundation model as a teacher to distill robust representations and help generate a diverse data augmentation. We conduct our analysis without requiring the teacher’s large-scale dataset and focus on out-of-distribution robustness by introducing an image-to-image generative model to discretize optimized perturbations. We aim to leverage in-distribution data to a greater extent and conduct our investigation with CLIP \citep{radford2021learning}. This marks a departure from existing work in knowledge distillation (KD), which tends to focus on smaller models and datasets. In fact, prior work \citep{cho2019efficacy} has called into question the utility of distilling from stronger teachers over training from scratch altogether. However, we find that although this \textit{model capacity gap} can impair improvements in clean accuracy, distilling from robust teachers improves out-of-distribution robustness, even when leveraging only in-distribution data. Surprisingly, distilling on clean ImageNet images from CLIP with the original KD objective \citep{hinton2015kd} results in a more robust ResNet34 than training with state-of-the-art regularization methods, despite a parameter difference of $\sim$13.7x. 

However, it is currently unclear in what settings a teacher's robustness can reliably transfer to a student and how to best combine distillation with data augmentation. We aim to answer this question both theoretically and empirically. We view adversarial training and data augmentation in the lens of domain generalization and prove that the diversity of the augmented samples leads to improved robustness. Our findings further suggest that foundation models make for strong teachers due to their diverse training distribution.

Building upon these findings, we introduce \textit{discrete adversarial distillation (DAD)}, a KD framework that further distills the robustness of a teacher model by leveraging the adversarial examples of the \textit{teacher} discretized by a VQGAN \citep{esser2021taming} as data augmentation. Notably, these samples are generated in an offline fashion, adding minor computational overhead compared to standard adversarial training. Intuitively, a foundation model will produce more diverse adversarial samples than a teacher trained on the same distribution, and we provide a theoretical framework using Wasserstein distance to formalize this proposition. Empirically, when distilling CLIP to a ViT-B, we achieve robust accuracy of 46.1\% on ImageNet-Sketch \citep{wang2019learning} and 65.1\% on ImageNet-Rendition \citep{hendrycks2021manyfaces}, improving on the state of the art by 17.8\% and 11.3\% respectively. DAD can also be freely combined with existing regularization techniques, resulting in improvements in clean accuracy. In summary, our contributions are \footnote{code at \url{https://github.com/andyz245/DiscreteAdversarialDistillation}}
\begin{enumerate}
    \item Establishing the KD for out-of-distribution robustness setting and a proposing a novel KD objective based on data augmentation 
    \item Providing a theoretical framework in KD for the choice of a teacher based on the diversity of the data augmentation
    \item Proposing a novel data augmentation DAD that outperforms both adversarial training and distillation techniques on natural distribution shifts
\end{enumerate}

\section{Related Work}
We define \textit{out-of-distribution accuracy/robustness} as a model's performance on non-adversarial distribution shifts, \textit{adversarial accuracy/robustness} to the case of robustness of adversarial examples, and \textit{clean accuracy} as evaluation on a dataset drawn from the same distribution.

\textbf{Data augmentation.} Data augmentation is frequently used as regularization \citep{yang2022image} by expanding the quantity and diversity of training data. This is often achieved through simple transformations such as rotations or image crops or more advanced techniques such as image mixing \citep{zhang2018mixup, hendrycks2019augmix}, reinforcement learning \cite{Cubuk_2019_CVPR,zhang2019adversarial} or adversarial training \citep{gong2021maxup, herrmann2022pyramid, mao2022dat} to find the optimal transformation. 

Adversarial training (AT) was initially introduced to enhance model robustness by training with adversarial examples \citep{madry2018towards}. Although effective for defending against adversarial attacks, several works \citep{tsipras2019robustness, zhang2019theoretically,wang2020high} have indicated a trade-off between adversarial clean accuracy in AT, limiting its effectiveness as a general data augmentation. Considerable efforts \citep{rade2021reducing, raghunathan2020understanding_icml} have been made to minimize this trade-off and directly use adversarial examples as data augmentation \citep{rebuffi2023revisiting, xie2020adversarial}, but there is still a considerable gap in out-of-distribution performance compared to foundation models like CLIP \citep{radford2021learning}. 

One line of work has recently been adapted to this issue. The model-based robustness paradigm \citep{cha2022domain,Robey2021ModelBasedDG} leverages the disentangled latent representations of pretrained generative models to improve or validate out-of-distribution robustness \citep{zhang2023foundation}, and can be used to improve adversarial examples. Most similar to our work is \citep{gowal2020achieving,calian2022defending,mao2022dat}, which use a GAN or VAE \citep{esser2021taming,karras2019style,oord2017neural} to discretize or discover adversarial examples. However, we leverage both a pretrained discretizer and foundation model, and adapt the AT objective to a knowledge distillation setting.

\textbf{Knowledge distillation.} Knowledge Distillation (KD) is a technique for training a student model with guidance from a stronger teacher model, widely applied in vision and language tasks \citep{chen2017learning,hinton2015kd,sanh2020distilbert,wang2020collaborative}. Most works focus on improving the KD objective with different knowledge transfer objectives, such as feature distance \citep{Chen2020WassersteinCR,romero2015fitnets, tian2019crd}, attention \citep{zagoruyko2017attention}, distribution \citep{passalis2018learning}, activation boundary \citep{heo2019knowledge}, and sample distance \citep{liu2019knowledge,park2019relational, tung2019similarity}. \citep{cho2019efficacy} raises the model capacity gap issue, where training suffers when the size of the student and teacher models differ, but we find that there is still benefit to distilling from a robust teacher. Another line of work, defensive distillation, aims to distill adversarial robustness from an adversarially trained teacher \citep{goldblum2020ard,Zhao2022EnhancedAA,zi2021rslad}. \cite{Huang_2023_ICCV} also distills from CLIP models, but for domain generalization on smaller datasets. We have a similar goal, but for out-of-distribution robustness and propose a loss objective not previously explored in prior works.

\section{Method}
We consider a standard dataset $\{(x, y)\}_{n=1}^{N} \sim P^{N}$ where instances and their labels $(x_n,y_n)$ are drawn from a distribution $P$ and are used for training the student model $\theta$. We also consider a discretizer, $Q$, and a teacher model $\phi$ and we use $\phi(x)$ to denote the output of a model given the sample $x$. $a$ denotes a function that applies a data augmentation on $x$, also known as a transformation function. Additionally, $a \in A$, the class of all such functions. 

\subsection{Setup}
Invariance is a desirable property where the model will have the same representation of an input after a transformation is applied. A model, $\theta$ is said to be invariant if $\theta(x) = \theta(x')$ for all $x \in U$, where $U$ is the set of all images that can be obtained by a transformation of $x$ by $a \in A$, which includes the identity transformation. $a$ ranges from worst-case imperceivable perturbations to real-world distribution shifts like artistic sketches \citep{wang2019learning}. In this paper, we focus on the latter, denoted as $\hat{a}$ and $\hat{A}$. We can consider $\hat{a}$ to represent an individual distribution $P$ and $\hat{A}$ to be drawn from the distribution of distributions $\hat{P}$. Our ultimate goal is to train $\theta$ to be invariant to transformations in $\hat{A}$. A model that maintains the same representation under transformations or distribution shifts of $x$ is said to be robust, which we define as the worst-case expected risk where 
\begin{equation}\label{eq:1}
    \rw(P,\epsilon) = \underset{P':w(P',P)\leq\epsilon}{\max}\mathds{E}_{(x, y)\sim P'}~l(\theta(x), y),
\end{equation}
\noindent where $l$ is the loss function and $r$ depends on an anchor distribution $P$, and $\epsilon$, the maximum deviation allowed under the Wasserstein's distance metric. Similarly, we can define the expected robustness and expected risk in terms of an arbitrary distribution, including the training distribution.
\begin{equation}\label{eq:2}
    r(P, \epsilon) = \mathds{E}_{P'\sim \hat{P}:w(P',P)\leq\epsilon}\mathds{E}_{(x,y)\sim P'} l (\theta(x), y),
\end{equation}
\begin{equation}\label{eq:3}
    r(P) = \mathds{E}_{(x,y)\sim P} ~l (\theta(x), y).
\end{equation}
The robustness of the resulting model is highly dependent on $x'$, $P$, and the choice of data augmentation. It is also susceptible to adversarial attacks, where $x'$ is a worst-case perturbation of $x$. Adversarial robustness can be improved with adversarial training, which couples the outer minimization objective from $\eqref{eq:3}$ with an inner maximization objective in the following
\begin{equation}\label{eq:4}
    \min~\mathds{E}_{(x,y)\sim P}~[l(\theta(x), y) + \max~l(\theta(x'), y) ], 
\end{equation}
where $x'=x+\epsilon$, $\epsilon$ is the perturbation, and $l$ is the cross-entropy loss. This achieves adversarial robustness, but cannot generalize well to real-world domain shifts in $\hat{A}$. To address this, we consider a generative model, $Q$, trained on $P$ and can model $\hat{A}$. Passing an input through $Q$ in the maximization objective applies a worst-case transformation from $\hat{A}$. This modifies $\eqref{eq:4}$ to minimize the empirical semantic adversarial risk,
\begin{equation}\label{eq:5}
    \min~\mathds{E}_{(x,y)\sim P}~[l(\theta(x), y) + \max~l(\theta(Q(x')), y) ].
\end{equation}

\begin{figure}
    \centering
    \includegraphics[width=\linewidth]{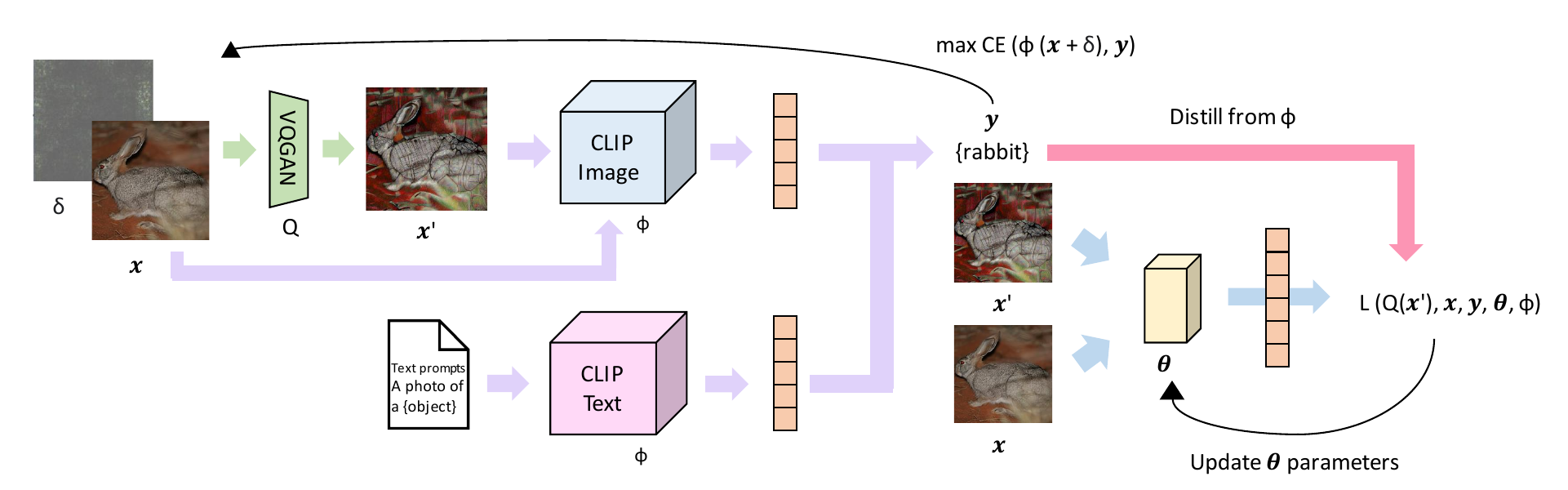}
    \label{fig:main}
    \caption{The overall framework of discrete adversarial distillation (DAD). We leverage a foundation model to generate and distill adversarial examples after discretization by a VQGAN.}
\end{figure}

\subsection{Distillation from a robust teacher} Next, we consider a setting where we also have access to a pretrained $\phi$ invariant to distribution shifts in $\hat{A}$. This enables us to leverage knowledge distillation (KD). This modifies $\eqref{eq:3}$ in the following
\begin{equation}\label{eq:6}
    \min~\mathds{E}_{(x,y)\sim P}[l_1(\theta(x), y) + l_2(\theta(x),\phi(x))], 
\end{equation}
where $l_1$ is the classification loss, the cross-entropy loss, and $l_2$ is a distance loss between $\theta(x)$ and $\phi(x)$, the KL divergence. $\eqref{eq:6}$ can be approximated by the empirical risk where 
\begin{equation}\label{eq:7}
    \min~l_1(\theta(x), y) + l_2(\theta(x),\phi(x)).
\end{equation}

Following theoretical work \citep{menon2021statistical}, distilling from a robust teacher with $l_2$ improves generalization due to minimizing the population risk, which has lower variance. In this formulation, the output of the teacher acts as a more robust supervisory signal than the label, encompassing the probability distribution of classes. This allows the student to learn the teacher representations on in-distribution data, but our experiments show that this is inadequate for out-of-distribution robustness, even when using a robust teacher. To address this, we combine \eqref{eq:5} and $\eqref{eq:7}$ to also distill the representations of $\phi$ on augmented samples,
\begin{equation}\label{eq:8}
    \min~l_1(\theta(x), y) + l_2(\theta(x),\phi(x)) + l_2(\theta(Q(x')),\phi(Q(x'))).
\end{equation}
The teacher is more robust, and is able to "solve" the perturbation for the student through distillation. Like \citep{mao2022dat} and \citep{wang2022dakd}, we train our models with both the original and augmented samples, expanding the size of the dataset and maintaining the original information path for $x$. We do not use the cross-entropy loss or $y$ labels for $x'$, as these labels may be wrong and could limit the expressiveness of the data augmentation. This allows us to use adversarial samples of the teacher in a novel maximization objective and obtain stronger empirical results.

\subsection{Discrete Adversarial Distillation}

The goal of our method, discrete adversarial distillation (DAD), is to distill from a large-scale vision-language model using only ImageNet \citep{deng2009imagenet} data. In the practical setting, we use approximations of an ideal discretizer and robust teacher. For $\phi$ we use CLIP \citep{radford2021learning}, which was trained on a large-scale dataset and achieves impressive zero-shot generalization on a variety of natural distribution shifts.  

For $Q$, we use a pretrained VQGAN \citep{esser2021taming}, following \citep{mao2022dat}, which also finds minimal improvements with a stronger discretizer. The VQGAN consists of an encoder, decoder, and quantization, where the encoder learns a latent representation of the input $x$, the quantization maps the representation to a visual codebook entry, and the decoder reconstructs $x$. We denote this process as $Q(x)$. In the adversarial training setting, $Q$ discretizes $x$, a worst-case perturbation $\epsilon$ is added by approximating the maximization objective to obtain $x'$, and the resulting image is then discretized by $Q$ again. To improve the transfer of robustness from the teacher, we hypothesize a better augmentation is more informative and exposes more teacher knowledge. We make this rigorous in the following section. 

To generate these examples, we adopt adversarial training and modify the maximization objective of $\eqref{eq:5}$ to use the worst-case transformations of the teacher. We hypothesize a teacher trained on more diverse data will have more informative adversarial examples. To ensure the correctness of the perturbation, we only use samples that are still classified correctly by the teacher after generation. We use the teacher as an "oracle", allowing it to distill correct representations of the transformed image. Additionally, we generate these samples in an offline manner asynchronously from the pretrained teacher and add them to the original dataset during the training of the student. These examples only need to be generated once for each teacher and be reused as additional data. This adds minor additional training costs compared to online adversarial training or DAT \citep{mao2022dat}, which has 11x and 3.5x the cost of standard training, respectively \citep{mao2022dat}. Our full objective is described as,
\begin{equation}\label{eq:9}
    \min~l_1(\theta(x), y) + l_2(\theta(x),\phi(x)) + l_2(\theta(Q(x')),\phi(Q(x'))).
\end{equation}
\begin{align*}
\text{where }  x' &= \underset{||x'-x||_p \leq \epsilon}{\max}~l_1(\phi(x),y),~\phi(Q(x')) = y. 
\end{align*}

\subsection{Theoretical Investigation}

We aim to investigate how to best distill robustness from a robust teacher trained on a large-scale dataset. We find that robust performance can be connected to the distance between the training and test distributions. A data augmentation can represent a new distribution, and the robustness of a model trained on this distribution can be quantified by its diversity and closeness to the test distributions. Although its representation on in-distribution samples can distill a degree of robustness, we show that due to being closer to the test distribution, it is more effective to leverage the discretized adversarial examples of the teacher than the student as our data augmentation of choice.

We begin with some assumptions.

\textbf{Assumption 1.} \textit{For an arbitrary data pair $(x, y)$, transformations in $\hat{A}$ do not alter the semantics of the data. We can also say we consider an ideal labeling function where any $(x, y)$ pair can be correctly mapped, $y=f(x)$}

\textbf{Assumption 2.} \textit{Any arbitrary distributions $P$ and $P'$ we compare possess smooth probability densities controlled by two constants $c$ and $\alpha$ depending on the smoothness of $P$ and $P'$ where $cw(P,P')^\alpha$ and $c > 0$ and is only dependent on $\alpha$.}

\textbf{Assumption 3.} \textit{For function $\gamma(|\Theta|, n, \beta)$ parameterized by hypothesis space $|\Theta|$, number of samples $n$, and the probability when the bound holds $\beta$, if the samples are \textit{i.i.d}, $\gamma(|\Theta|, n, \beta) = 2\mathcal{R}(\mathcal{L}) + \sqrt{(\log{1/\beta})/2n}$, where $\mathcal{R}(\mathcal{L})$ stands for Rademacher complexity and $\mathcal{L} = \{l_{\theta} \,|\, \theta \in \Theta \}$, where $l_{\theta}$ is the loss function corresponding to $\theta$. Additionally, if $\Theta$ is finite, $l(\cdot, \cdot)$ is a zero-one loss, and samples are \textit{i.i.d},  then $\gamma(|\Theta|, n, \beta)=\sqrt{(\log(|\Theta|) + \log(1/\beta))/2n}$}

\begin{lemma}
    Given Assumptions 1 and 2 and variational divergence $tv$, for two arbitrary distributions $P$ and $P'$ with corresponding density functions $\delta$ and $\delta'$, $r(P') \leq r(P) + w(P',P)$.
\end{lemma}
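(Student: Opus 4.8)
The plan is to control the gap $r(P') - r(P)$ directly and show it is at most $w(P',P)$, which immediately yields the claim since $r(P') = r(P) + (r(P') - r(P)) \le r(P) + |r(P') - r(P)|$. First I would rewrite the two risks as integrals against their respective densities,
\[
r(P') - r(P) = \int l(\theta(x), y)\,\big(\delta'(x,y) - \delta(x,y)\big)\,dx\,dy,
\]
so the entire problem reduces to bounding the integral of one fixed function (the loss composed with the student) against the signed measure $\delta' - \delta$.

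The next step is to pass from this raw integral to a divergence between $P$ and $P'$. By Assumption 1 the loss $l(\theta(x), y) = l(\theta(x), f(x))$ is a well-defined function of $x$ alone, and under the zero-one / bounded-loss convention it is bounded, so a Hölder-type estimate gives $|r(P') - r(P)| \le \int |\delta' - \delta| = 2\,tv(P',P)$; that is, the gap is controlled by the variational divergence. I would then invoke Assumption 2, which supplies the smoothness comparison relating total variation to transport cost, $tv(P',P) \le c\,w(P',P)^{\alpha}$, to convert the variational-divergence bound into a Wasserstein bound. A cleaner alternative skips total variation entirely: if $l(\theta(\cdot), f(\cdot))$ is $1$-Lipschitz, then Kantorovich--Rubinstein duality gives $|r(P') - r(P)| = |\mathds{E}_{P'} l - \mathds{E}_{P} l| \le w(P',P)$ in a single line, since $w$ is exactly the supremum of such expectation gaps over $1$-Lipschitz test functions.

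The main obstacle is the bookkeeping of constants rather than any deep step. The stated inequality carries no $c$, $\alpha$, or Lipschitz factor in front of $w(P',P)$, whereas the honest total-variation route produces $2c\,w(P',P)^\alpha$ and the duality route produces $\mathrm{Lip}(l\circ\theta)\cdot w(P',P)$. To land on exactly $w(P',P)$ I would normalize explicitly: take the loss bounded by $1$, take the relevant smoothness/Lipschitz constants to be at most $1$ (or absorb them into the definition of $w$), and take $\alpha = 1$ so the power collapses. Making these normalizations precise, and verifying that Assumption 2 really delivers the $tv \le c\,w$ comparison in the $\alpha = 1$ regime (rather than a reversed or mismatched inequality), is where the argument demands the most care; the underlying measure-theoretic manipulations are otherwise routine.
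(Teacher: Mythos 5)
Your proposal is correct and follows essentially the same route as the paper: the same add-and-subtract decomposition $r(P') \le r(P) + |r(P') - r(P)|$, the same bounded-loss estimate reducing the risk gap to $\int |\delta - \delta'|\,dx = tv(P,P')$, and the same final conversion of total variation into Wasserstein distance via the smoothness comparison of Assumption 2 (the paper merely unrolls that last step explicitly, via the kernel-smoothing argument of Chae and Walker with Legendre-polynomial kernels, rather than citing the assumption as a black box). Your caution about constants is well placed and, if anything, more careful than the paper itself: the paper's own kernel derivation, after optimizing the bandwidth $h$, produces a bound of the form $c\,w(P',P)^{\alpha/(\alpha+1)}$ with Sobolev-norm-dependent constants, so the clean statement $r(P') \le r(P) + w(P',P)$ already absorbs exactly the normalizations you identified as the delicate point.
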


\begin{lemma}
    Given Assumption 3, Lemma 3.1, and probability at least $1-\Gamma$, 
    \begin{align*}
        r\big(P') \leq r\big({(X,Y)}_P\big) + w(P', P) + \xi(n_{{(X,Y)}_P}, \Theta, \Gamma)
    \end{align*}
    where $n_{{(X,Y)}_P}$ denotes the number of sample sizes in the finite dataset ${(X,Y)}_P$, and $\xi$ is a vanilla term that connects $n_{{(X,Y)}_P}$ and $\Theta$ with the generalization error bound. 
\end{lemma}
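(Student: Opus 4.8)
The plan is to chain Lemma 3.1 with a standard uniform-convergence generalization bound, identifying the term $\xi$ with the function $\gamma$ supplied by Assumption 3. Lemma 3.1 already controls the gap between the two \emph{population} risks: applying it with the training distribution $P$ as the anchor gives $r(P') \leq r(P) + w(P',P)$ deterministically. What remains is to replace the inaccessible population risk $r(P)$ by the empirical risk $r\big((X,Y)_P\big)$ evaluated on the finite i.i.d. sample of size $n_{(X,Y)_P}$, paying only the generalization term $\xi$.

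First I would invoke Assumption 3, which in the i.i.d. setting furnishes $\gamma(|\Theta|, n, \beta) = 2\mathcal{R}(\mathcal{L}) + \sqrt{\log(1/\beta)/2n}$, or the finite-hypothesis variant $\sqrt{(\log|\Theta| + \log(1/\beta))/2n}$ when $\Theta$ is finite and $l$ is the zero-one loss. The standard Rademacher/McDiarmid argument then yields that, with probability at least $1-\Gamma$ over the draw of the sample, $\sup_{\theta \in \Theta}\big(r(P) - r((X,Y)_P)\big) \leq \gamma(|\Theta|, n_{(X,Y)_P}, \Gamma)$. Setting $\xi(n_{(X,Y)_P}, \Theta, \Gamma) := \gamma(|\Theta|, n_{(X,Y)_P}, \Gamma)$, this gives $r(P) \leq r\big((X,Y)_P\big) + \xi(n_{(X,Y)_P}, \Theta, \Gamma)$ for the (data-dependent) $\theta$ actually used. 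Substituting this into the Lemma 3.1 bound yields
\begin{equation*}
    r(P') \leq r\big((X,Y)_P\big) + w(P',P) + \xi(n_{(X,Y)_P}, \Theta, \Gamma),
\end{equation*}
which is the claim; the $1-\Gamma$ confidence is inherited entirely from the generalization step, since Lemma 3.1 contributes no randomness.

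The main obstacle is ensuring that the generalization bound holds \emph{uniformly} over the hypothesis class $\Theta$ rather than merely pointwise, because the trained $\theta$ is selected in a data-dependent manner; this is precisely why the Rademacher complexity $\mathcal{R}(\mathcal{L})$ (or the $\log|\Theta|$ union-bound term) must appear inside $\gamma$ and cannot be dropped. A secondary point to verify is that the loss is bounded, as in the zero-one case of Assumption 3, so that McDiarmid's inequality applies with the stated $\sqrt{\log(1/\Gamma)/2n}$ concentration term. The Wasserstein term $w(P',P)$ from Lemma 3.1 is untouched by the sampling randomness and carries through verbatim.
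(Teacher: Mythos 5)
Your proposal matches the paper's proof exactly: the paper also proves this lemma by applying Lemma 3.1 with anchor $P$ and then invoking conventional uniform-convergence generalization analysis (Assumption 3's $\gamma$) to replace the population risk $r(P)$ with the empirical risk plus the term $\xi$. Your version is simply a fully spelled-out rendering of the paper's one-line argument, correctly noting that the $1-\Gamma$ confidence comes entirely from the concentration step while the Wasserstein term passes through deterministically.
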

\begin{proof}
We apply conventional generalization analysis through uniform convergence to Lemma 3.1. We leave the full proof of Lemma 3.1 in Sec. \ref{sec-proof} in the Appendix.
\end{proof}
This results in an intuitive conclusion: empirical robustness performances depends on the divergence between the training and testing distributions, as well as two additional elements. The first is the empirical error term on the training distribution, which can be quantified, and the second is a technical term influenced by the sample size and hypothesis space. The exact manner in which $\xi$ depends on these parameters is contingent on the particular convergence analysis employed.

Therefore, the decisive factor for robustness is the degree of deviation between the training and testing distributions. Therefore, using diverse data augmentations close to the testing distribution will lead to the largest gains in robustness. This intuitive understanding suggests that training with distributions generated from the teacher will be more advantageous, as the teacher, having been trained on a large dataset, encapsulates more diverse distributions.

There findings are applicable to any arbitrary distributions $P \sim \hat{P}$. Nevertheless, this doesn't inherently encapsulate the characteristics of foundation models that trained on data from across the internet, composed of a variety of semantic distributions from $\hat{A}$.

Therefore, we use $\mP \in \hat{A}$ to denote a set of $m$ distributions, i.e., $\mP = \{P_1, P_2, \dots, P_m\}$, and we consider a pretrained foundation model trained with such a set of distributions. To facilitate forthcoming discussions, we extend the notation of $w$ to encompass sets, defining $w(P', \mP)$ as the average divergence between distributions within the set. Thus, $w(P', \mP) := \sum_i^m w(P',P_i)/m, \quad \forall P_i \in \mP$. 

\begin{lemma}
    Given a distribution $P$, we generate a new distribution $P^* \in \hat{A}$ using the discretized worst-case adversarial samples of a model $\theta$. Training $\theta$ with adversarial training is equivalent to training $\theta$ with empirical risk minimzation on $P^*$ where $w(P, P^*) \leq \epsilon$. 
\end{lemma}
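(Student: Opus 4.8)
The plan is to prove the two assertions separately, both organized around the transport map that the inner maximization implicitly defines. First I would make the inner problem explicit as a map: for each $(x,y)\sim P$, the inner objective $\max_{\|x'-x\|_p\leq\epsilon} l(\theta(x'),y)$ is attained at some worst-case point, and after discretization this defines a measurable map $T\colon x\mapsto Q(x')$. I would then \emph{define} $P^*$ to be the pushforward $T_{\#}P$, i.e. the law of the adversarial example $T(x)$ carried together with its label. By Assumption 1 the labeling function is preserved under transformations in $\hat{A}$, so $y=f(x)=f(T(x))$ and the label attached to $T(x)$ is well defined; and since $Q$ models $\hat{A}$, the image $T(x)$ is itself a transformation in $\hat{A}$, which places $P^*\in\hat{A}$ as claimed.

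With $T$ and $P^*$ fixed, the equivalence of objectives is a change-of-variables computation. The adversarial term $\mathds{E}_{(x,y)\sim P}[\max_{x'} l(\theta(x'),y)]$ equals $\mathds{E}_{x\sim P}\, l(\theta(T(x)),f(x))$ by the definition of $T$, and this is exactly $\mathds{E}_{(x^*,y)\sim P^*}\, l(\theta(x^*),y)$ by the pushforward identity. Hence minimizing over $\theta$ the objective built from the discretized worst-case samples is identical to empirical risk minimization of $\theta$ on $P^*$ (with the clean term corresponding to ERM on $P$ itself), which is the first half of the statement.

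For the Wasserstein bound I would exhibit the coupling induced by $T$. The measure $\gamma:=(\mathrm{id},T)_{\#}P$ has marginals $P$ and $T_{\#}P=P^*$, so it is an admissible coupling, and therefore $w(P,P^*)\leq \mathds{E}_{x\sim P}\,\|x-T(x)\|$ by taking this particular coupling as an upper bound for the infimum defining $w$. It then remains only to control the integrand pointwise: the perturbation constraint guarantees $\|x'-x\|_p\leq\epsilon$, so in the idealized setting where the discretizer preserves the $\epsilon$-neighborhood we obtain $\|x-T(x)\|\leq\epsilon$ almost surely, and hence $w(P,P^*)\leq\epsilon$.

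The main obstacle will be the discretization step. The constraint directly controls only $\|x'-x\|$, whereas $T$ applies $Q$ (twice), and VQGAN reconstruction is not an isometry, so $\|x-Q(x')\|$ need not equal $\|x'-x\|$. I expect the cleanest resolution is to absorb the reconstruction displacement into the modeling assumption on $Q$ — treating it as an ideal discretizer whose output remains in the $\epsilon$-ball (equivalently, within $\hat{A}$ at bounded transport cost) — so that the per-sample displacement, and thus the induced Wasserstein distance, stays at most $\epsilon$. Making this quantitative rather than assumed is the delicate point; everything else reduces to the pushforward bookkeeping and the standard inequality that any coupling upper-bounds the optimal transport cost.
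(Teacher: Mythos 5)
Your proposal is correct and follows essentially the same route as the paper's own proof: the paper likewise turns the inner maximization into a transport map $T_\theta\colon (x,y)\mapsto(x^*,y)$ with $x^*$ the worst-case point, defines $P^*$ as the pushforward $T_\theta\#P$, obtains $r(P,\epsilon)=r(P^*)$ by exactly your change-of-variables identity, and bounds $w(P,P^*)\leq \mathbb{E}_P[d_X(x,x^*)]\leq\varepsilon_B$ by using the coupling induced by $T_\theta$, where $\varepsilon_B$ is the radius of the adversary's constraint set. The one substantive difference is the point you flag at the end: the paper's proof quietly drops the discretizer, taking the transported point to be $x^*$ itself rather than $Q(x')$, so the bound $d_X(x,x^*)\leq\varepsilon_B$ follows immediately from the perturbation constraint; your version, which keeps $T\colon x\mapsto Q(x')$ and observes that VQGAN reconstruction is not an isometry so $\|x-Q(x')\|$ is not directly controlled by $\|x'-x\|\leq\epsilon$, is more faithful to the lemma statement (which explicitly says \emph{discretized} samples) and makes explicit an idealized-discretizer assumption that the paper leaves implicit. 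In short, your bookkeeping matches the paper's, and your identified ``delicate point'' is a genuine gap in the paper's own argument rather than in yours — resolving it quantitatively would require a per-sample bound on the reconstruction displacement of $Q$, e.g. $\sup_x d_X(Q(x),x)\leq\delta$ yielding $w(P,P^*)\leq\epsilon+2\delta$, which neither you nor the paper derives but which your write-up at least names.
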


We leave the full proof of Lemma 3.3 in Sec. \ref{sec-proof} of the Appendix. Finally, let's denote the model $\phi$ trained over distribution $P$ as $\phi(P)$ and the adversarial data augmentation process that results in a new semantic data distribution as $D$. We aim to compare $w(D(\phi(\mP)),P')$ and $w(D(\phi(P)),P')$. In this context, $P'$ is any arbitrary testing distribution, $P$ is a specific training distribution, and $\mP$ represents the set of distributions used for training foundation models.

\begin{lemma}
    Assuming $\hat{P}$ is continuous and has a finite expected value, for two sets of distributions $\mP_1$ and $\mP_2$, assuming there is at least one distribution in the intersection of $\mP_1$ and $\mP_2$, for a fixed testing distribution $P'$, we have
    \begin{align*}
        \mathbb{E}_{\hat{P}}\Big[\big\vert w(D(\phi(\mP_1)),P') - w(D(\phi(\mP_2)),P')\big\vert\Big]  
        \leq 2\mathbb{E}_{\hat{P}}\big[\sup_{P\in \mP_1}w(P, P') + \sup_{P\in \mP_2}w(P, P')\big]
    \end{align*}
\end{lemma}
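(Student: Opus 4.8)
The plan is to rely only on the metric structure of the Wasserstein distance $w$ together with Lemma 3.3, and to reduce the two-sided statement to separately controlling each of the distances $w(D(\phi(\mP_1)),P')$ and $w(D(\phi(\mP_2)),P')$. Since both quantities are non-negative, we have the elementary bound $|a-b|\le a+b$ for $a,b\ge 0$, so
\[
\big\vert w(D(\phi(\mP_1)),P') - w(D(\phi(\mP_2)),P')\big\vert \le w(D(\phi(\mP_1)),P') + w(D(\phi(\mP_2)),P').
\]
It therefore suffices to show, pointwise in the draw from $\hat{P}$, that each summand is at most $2\sup_{P\in\mP_i}w(P,P')$; summing the two bounds and taking $\mathbb{E}_{\hat{P}}$ then yields exactly the claimed inequality.

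To bound a single term $w(D(\phi(\mP_i)),P')$, I would introduce the common distribution $P_0\in\mP_1\cap\mP_2$, which exists by the intersection assumption and acts as a shared anchor for both sets. Repeated use of the triangle inequality for $w$ gives
\[
w(D(\phi(\mP_i)),P') \le w\big(D(\phi(\mP_i)),D(\phi(P_0))\big) + w\big(D(\phi(P_0)),P_0\big) + w(P_0,P').
\]
The last term is at most $\sup_{P\in\mP_i}w(P,P')$ because $P_0\in\mP_i$; the middle term is controlled by Lemma 3.3, which states that the discretized adversarial augmentation $D$ displaces a training distribution by at most $\epsilon$ under $w$; and the first term measures how far the augmented distribution of the whole set $\mP_i$ sits from that of its member $P_0$, which I would argue is again governed by the diversity $\sup_{P\in\mP_i}w(P,\cdot)$ of the set. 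Collecting these pieces (and absorbing the $\epsilon$-slack from Lemma 3.3 into the diversity term) produces the factor-of-two bound $2\sup_{P\in\mP_i}w(P,P')$ for each $i$.

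Finally I would take $\mathbb{E}_{\hat{P}}$ of the pointwise inequality, using linearity and monotonicity of expectation; here the assumption that $\hat{P}$ is continuous with finite expected value is what guarantees that $\mathbb{E}_{\hat{P}}[\sup_{P\in\mP_i}w(P,P')]$ is finite and that passing the expectation through the supremum bound is legitimate. The step I expect to be the main obstacle is controlling $w(D(\phi(\mP_i)),D(\phi(P_0)))$ --- that is, quantifying how much the trained-then-augmented distribution of an \emph{entire set} of distributions can differ from that of a single representative member. This requires making precise, via Lemma 3.3 and the definition of $D\circ\phi$ acting on a set, that training and adversarial augmentation are non-expansive enough for this gap to remain absorbed into the set's diversity term, rather than introducing an uncontrolled dependence on the model $\phi$ itself.
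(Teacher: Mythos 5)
Your reduction via $|a-b|\le a+b$ is where the argument breaks: it discards exactly the cancellation structure that the lemma depends on, and the intermediate bound you then need --- $w(D(\phi(\mP_i)),P') \le 2\sup_{P\in\mP_i}w(P,P')$ pointwise --- is false in general. Take $\mP_i=\{P'\}$: the right-hand side is $0$, but Lemma 3.3 only says the augmentation displaces a distribution by \emph{at most} $\epsilon$ in Wasserstein distance; nothing forces $D(\phi(P'))=P'$, so the left-hand side is generically positive. Both of the problematic pieces you flag yourself --- the slack $w(D(\phi(P_0)),P_0)\le\epsilon$ and the term $w(D(\phi(\mP_i)),D(\phi(P_0)))$ --- are additive contributions depending on $\epsilon$ and on the model $\phi$, and no hypothesis in the paper (no non-expansiveness of $D\circ\phi$ under $w$, no relation such as $\epsilon\le\sup_{P\in\mP_i}w(P,P')$) licenses "absorbing" them into the diversity term. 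Since the stated bound carries no additive slack whatsoever, your per-term strategy cannot close, and your own closing paragraph correctly identifies the unfixable step.

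The paper's proof sidesteps this precisely by never bounding either distance absolutely. It uses the common distribution $P\in\mP_1\cap\mP_2$ as a shared anchor and controls two \emph{signed differences}: it derives an upper bound of the form $\mathbb{E}_{\hat{P}}\big[w(D(\phi(\mP_1)),P')-w(D(\phi(P)),P')\big]\le 2\mathbb{E}_{\hat{P}}\sup_{P\in\mP_1}w(P,P')+c$ and a lower bound of the form $\mathbb{E}_{\hat{P}}\big[w(D(\phi(\mP_2)),P')-w(D(\phi(P)),P')\big]\ge c-2\mathbb{E}_{\hat{P}}\sup_{P\in\mP_2}w(P,P')$, where the \emph{same} constant $c$ packages the augmentation-displacement terms coming from Lemma 3.3 (the $\rw(P,\epsilon)$ quantities). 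Subtracting the two, $c$ cancels exactly and the constant-free bound on the difference follows; the symmetric argument with $\mP_1$ and $\mP_2$ exchanged yields the absolute value. So the intersection assumption is not merely a pivot for triangle inequalities, as in your sketch --- it supplies a common reference quantity against which the uncontrollable, $\phi$-dependent contributions appear identically in both estimates and cancel. To repair your attempt you would need to restructure it around this difference-of-differences comparison rather than the $|a-b|\le a+b$ split.
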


We leave the full proof of Lemma 3.4 in Sec. \ref{sec-proof} of the Appendix. Our findings provide a comparison of the differences in the training mechanisms for various models $\phi$, each trained with distinct data sets ($\mP_1$ and $\mP_2$) and subjected to adversarial training. The methodology can easily be simplified to compare the bounded results between adversarial training based on the teacher model and standalone adversarial training by setting one of the training datasets to consist of a single distribution.

In the scope of our investigation, we compare DAD and discrete adversarial training based on the teacher model, referred to as $\mP_1$, with DAT \cite{mao2022dat} and the traditional approach of adversarial training based on the student model, referred to as $\mP_2$. Our results suggest two key implications:

\begin{enumerate}
    \item Given a fixed $\mP_2$, a more diverse $\mP_1$ potentially results in greater variations in performance. We show visualizations that support this in Sec. \ref{sec-charts} in the Appendix. In other words, the use of larger, more diverse pretrained datasets for the teacher model or more diverse data augmentations for the student model increases the likelihood of creating a robust final model. This is also been shown empiricially in prior work investigating the source of robustness in foundation models \citep{fang2022data, taori2020measuring}. However, in practice, the efficacy of distilling from this teacher depends on a variety of factors, including student-teacher architecture, training objective, and student capacity.
    \item For a fixed $\mP_2$, the greater the distance between the testing dataset $P'$ and $\mP_2$, the more likely it is that the performance gains will be realized by distilling the teacher model trained on a more extensive set of training data. To put it intuitively, if the testing dataset closely resembles the training dataset (i.e., $w(\mP, P'$) is small), then adversarial training based on the teacher model might not yield significant performance improvements. However, if the testing dataset differs considerably from the training dataset, then adversarial training based on a teacher model that has been trained on a larger dataset is more likely to yield superior performance gains. This observation aligns with our empirical results.
\end{enumerate}

\section{Experimental Results}

\begin{table}[t]
\caption{Main results on natural distribution shifts and in-distribution ImageNet. Baseline models are ViT-B/16 (top half) and ResNet50 (bottom half) trained on 224 x 224 images. The CLIP teacher is ViT-L/14. DAD variants have the best average performance for both types of distributions.}
\vspace{0.1in}
\relsize{-1.5}
  \begin{minipage}{.53\linewidth}
      \centering
      \label{table-natural}
      \begin{tabular}{l|ccc|c}
        \toprule
        Method & Rendition & Sketch & A & Avg \\
        \midrule
        CLIP \citep{radford2021learning} & 87.7 & 61.6 & 64.2 & 71.2 \\
        \midrule
        ViT \citep{dosovitskiy2020image} & 27.1 & 17.3 & 8.0 & 17.5  \\
        Advprop \citep{xie2020adversarial} & 43.5 & 31.7 & 18.5 & 31.2 \\
        Fast Advprop \citep{mei2022fast} & 41.8 & 29.4 & 17.9 & 29.7 \\
        Debiased \citep{li2020shape} & 40.3 & 29.4 & 18.3 & 29.3 \\
        AugReg-ViT \citep{steiner2022train} & 39.5 & 29.2 & 19.0 & 29.2 \\
        + Pyramid AT \citep{herrmann2022pyramid} & 47.7 & 36.8 & 23.0 & 35.8 \\ 
        + DAT \citep{mao2022dat} & 47.3 & 34.8 & 30.2 & 37.4 \\
        \textbf{+ DAD (Ours)} & \textbf{65.1} & \textbf{46.1} & \textbf{31.8} & \textbf{47.7} \\
        \textbf{+ DAT + DAD (Ours)} & 53.2 & 39.3 & 28.2 & 40.2 \\
        \midrule
        ResNet50 \citep{he2016deep} & 36.1 & 24.0 & 0.0 & 20.0 \\
        Advprop \citep{xie2020adversarial} & 38.8 & 25.5 & 4.3 & 22.9 \\
        Pyramid AT \citep{herrmann2022pyramid} & 38.9 & 23.8 & 3.0 & 21.9 \\
        Debiased \citep{li2020shape} & 40.8 & 28.4 & 3.5 & 24.2 \\
        DAT \citep{mao2022dat} & 42.0 & 27.3 & 4.4 & 24.6 \\
        \textbf{DAD (Ours)} & \textbf{51.6} & \textbf{35.8} & \textbf{7.7} & \textbf{31.7} \\
        \textbf{DAT + DAD (Ours)} & 47.7 & 33.3 & 6.1 & 29.0 \\
        \bottomrule
      \end{tabular}
  \end{minipage}
  \hfill
  \begin{minipage}{.45\linewidth}
    \centering
    \label{table-in-distribution}
      \begin{tabular}{l|cc|c}
        \toprule
        Method & ImageNet & V2 & Avg \\
        \midrule
        CLIP \citep{radford2021learning} & 79.9 & 72.9 & 76.4 \\
        \midrule
        ViT \citep{dosovitskiy2020image} & 72.8 & 58.7 & 65.8  \\
        Advprop \citep{xie2020adversarial} & 79.5 & 68.7 & 74.1 \\
        Fast Advprop \citep{mei2022fast} & 79.0 & 67.0 & 73.0 \\
        Debiased \citep{li2020shape} & 79.3 & 67.6 & 73.5 \\
        AugReg-ViT \citep{steiner2022train} & 79.9 & 67.9 & 73.9 \\
        + Pyramid AT \citep{herrmann2022pyramid} & 81.7 & 70.3 & 76.0 \\ 
        + DAT \citep{mao2022dat} & 81.5 & 70.8 & 76.2 \\
        \textbf{+ DAD (Ours)} & 79.6 & 69.9 & 74.8 \\
        \textbf{+ DAT + DAD (Ours)} & \textbf{81.9} & \textbf{71.7} & \textbf{76.8} \\
        \midrule
        ResNet50 \citep{he2016deep} & 76.1 & 63.2 & 69.7 \\
        Advprop \citep{xie2020adversarial} & 77.6 & 65.5 & 35.6 \\
        Pyramid AT \citep{herrmann2022pyramid} & 75.5 & 62.5 & 71.6 \\
        Debiased \citep{li2020shape} & 76.9 & 65.0 & 71.0 \\
        DAT \citep{mao2022dat} & 76.5 & 65.0 & 70.8 \\
        \textbf{DAD (Ours)} & 75.7 & 65.0 & 70.4 \\
        \textbf{DAT + DAD (Ours)} & \textbf{77.8} & \textbf{66.0} & \textbf{71.9} \\
        \bottomrule
      \end{tabular}
  \end{minipage}
  \label{table-main}
\end{table}

\subsection{Experimental Setup}

\textbf{Models.} We focus primarily on ResNet50 \citep{he2016deep} and ViT-B/16 \citep{dosovitskiy2020image}. We distill from a frozen pretrained CLIP-ViT-L/14 \citep{radford2021learning},  trained on 224 x 224 resolution images with a patch size of 14.

\textbf{Datasets.} We train our models on ImageNet-1K \citep{deng2009imagenet}. We use several evaluation datasets. For in-distribution performance, we evaluate on ImageNet and ImageNet-V2 \citep{recht2019imagenet}, a replication of ImageNet's evaluation set. We focus our study on natural distribution shifts and evaluate on ImageNet-A \citep{hendrycks2021natural}, a set of adversarially filtered natural images misclassified by a ResNet50, ImageNet-Sketch \citep{wang2019learning} which contains artistic sketches of objects, and ImageNet-Rendition \citep{hendrycks2021manyfaces} which contains abstract or rendered objects. To observe performance on distributions that are out-of-distribution for the CLIP teacher, we evaluate on synthetic distribution shifts ImageNet-C \citep{hendrycks2019benchmarking} , which applies corruptions (snow, blur, noise, etc.) to ImageNet, and Stylized-ImageNet \citep{geirhos2019imagenet}, which processes ImageNet with style transfer from a source image.

\subsection{Baselines} DAD consists of both a data augmentation and knowledge distillation objective. We compare to both types of methods in our experiments.

\textbf{Common data augmentations.} For the simplest baseline, we follow \citep{steiner2022train} and train with common data augmentations Mixup \citep{zhang2018mixup}, which combines images and labels, and Randaugment \citep{cubuk2019randaugment}, which learns a policy over common transformations such as brightness or shear.

\textbf{DAT.} We also compare against the state-of-the-art data augmentation, DAT \citep{mao2022dat}, which uses a VQGAN \citep{esser2021taming} to discretize adversarial examples in adversarial training. DAT uses the standard adversarial training objective \ref{eq:5}.

\textbf{Knowledge distillation.} We compare against other logit-based knowledge distillation objectives, which only distill the output logits of the teacher. We consider standard knowledge distillation \ref{eq:7} and DIST \citep{huang2022knowledge}, which aims to address the model capacity gap issue by distilling logit class relationships. Neither method natively supports distillation on augmented samples, so we also compare to defensive distillation objectives ARD \citep{goldblum2020ard} and RSLAD \citep{zi2021rslad}. ARD modifies \ref{eq:7} to use the KL divergence between the student logits on the augmented sample with the teacher logits on the normal sample. RSLAD is an extension of ARD that replaces the cross-entropy terms with a KL divergence loss. For a fair comparison, we use DAD as the data augmentation.
 
\subsection{Main Experimental Results on ViT-B/16 and ResNet50}

\textbf{ImageNet-1K.} Tab. \ref{table-main} shows results on ImageNet-1K and its distribution shifts. We compare against ViT-B/16 and ResNet50 models without data augmentation and with the state-of-the-art data augmentation approaches, PyramidAT \cite{herrmann2022pyramid} and DAT \citep{mao2022dat}. We combine DAD with the data augmentations used in AugReg \citep{steiner2022train}, MixUp \citep{zhang2018mixup} and RandAugment \citep{cubuk2019randaugment}. We find that DAD has the best average performance across datasets for both models. For ViT-B we find that DAD has competitive in-distribution performance, but greatly improves performance on natural distributions. Compared to Pyramid AT and DAT, DAD also generalizes well to ResNet50. This suggests that the DAD data augmentation can be used across student architectures and that due to distillation, DAD is especially effective when training smaller models.

We also demonstrate DAD can be combined with existing approach DAT for stronger in-distribution performance. We add our distillation objective to the DAT objective and train the student on both the teacher's and its own adversarial samples. However, this comes at the cost of lower performance on natural distribution shifts, although we do observe that DAD + DAT still outperforms the prior state-of-the-art on ImageNet-Sketch and ImageNet-Rendition.

\begin{table}[t]
    \centering
    \caption{Main results on synthetic distribution shifts, which is out-of-distribution for the CLIP teacher. Models are ViT-B/16 (left) and ResNet50 (right) trained on 224 x 224 images. The CLIP teacher is ViT-L/14. Compared to DAT, DAD tends to perform worse on ViT-B but better on ResNet50.}
    \label{tab-synthetic}
    \vspace{0.1in}
    \begin{minipage}{.5\linewidth}
      \centering
      \begin{tabular}{l|cc}
        \toprule
        Method & C ($\downarrow$) & Stylized \\
        \midrule
        CLIP \citep{radford2021learning} & 60.2 & 18.5 \\
        \midrule
        ViT \citep{dosovitskiy2020image}& 74.0 & 6.4  \\
        Advprop \citep{xie2020adversarial} & 51.5 & 19.2 \\
        Fast Advprop \citep{mei2022fast} & 53.3 & 18.4\\
        Debiased \citep{li2020shape} & 49.8 & 22.4  \\
        AugReg-ViT \citep{steiner2022train} & 54.5 & 16.6 \\
        + Pyramid AT \citep{herrmann2022pyramid} & 45.0 & 19.1 \\ 
        + DAT \citep{mao2022dat} & \textbf{44.7} & \textbf{23.1} \\
        \textbf{+ DAD} & 53.2 & 22.4 \\
        \textbf{+ DAT + DAD} & 47.5 & 22.6 \\
        \bottomrule
      \end{tabular}
  \end{minipage}%
  \hfill
  \begin{minipage}{.5\linewidth}
    \centering
      \begin{tabular}{l|cc}
        \toprule
        Method & C ($\downarrow$) & Stylized \\
        \midrule
        ResNet50 \citep{he2016deep} & 76.7 & 7.4 \\
        Advprop \citep{xie2020adversarial} & 70.5 & 8.0 \\
        Pyramid AT \citep{herrmann2022pyramid} & 76.4 & 10.4  \\
        Debiased \citep{li2020shape} & 67.6 & \textbf{17.4} \\
        DAT \citep{mao2022dat} & 74.2 & 10.8 \\
        \textbf{DAD} & 67.4 & 13.1 \\
        \textbf{DAT + DAD} & \textbf{65.2} & 14.6 \\
        \bottomrule
      \end{tabular}
  \end{minipage}
\end{table}

\textbf{Synthetic distribution shifts.} We also evaluate our models on synthetic distribution shifts composed of generated transformations in Tab. \ref{tab-synthetic}. Since the diverse training distribution of CLIP is mostly composed of natural distribution shifts, it has weaker zero-shot generalization to synthetic distribution shifts, and this performance is inherited in the student model. In fact, zero-shot CLIP is already outperformed by some compared methods on ImageNet-C, and Stylized-ImageNet. However, for ResNet50 DAD also has the best ImageNet-C performance, likely due to compared methods being specialized for certain distribution shifts \citep{li2020shape, xie2020adversarial} or architectures \citep{herrmann2022pyramid}.

\begin{table}[t]
\footnotesize
  \caption{Comparison to distillation objectives on ImageNet. We use DAD for the data augmentation of ARD and RSLAD for a fair comparison. All students are ViT-B/16 trained on 224 x 224 images and all teachers are CLIP-ViT-L/14. We find that our distillation objective is the best at distilling out-of-distribution robustness from CLIP.}
  \vspace{0.1in}
  \label{tab-distillation}
  \centering
    \begin{tabular}{l|cc|cc|ccc|c}
    \toprule
    & \multicolumn{2}{c|}{In-distribution} & \multicolumn{2}{c|}{Synthetic} & \multicolumn{3}{c}{Natural} \\
    \midrule
    Method & ImageNet & V2 & C ($\downarrow$) & Stylized & Rendition & Sketch & A & Avg \\ 
    \midrule
    KD \citep{hinton2015kd} & 78.6 & 67.2 & 61.5 & 16.2 & 51.5 & 34.7 & 16.0 & 43.2 \\
    DIST \citep{huang2022knowledge} & 76.6 & 63.9 & 65.8 & 12.7 & 40.8 & 26.9 & 11.2 & 38.0\\
    ARD \citep{goldblum2020ard} & \textbf{80.1} & \textbf{70.3} & \textbf{52.1} & 22.2 & 55.6 & 38.6 & 27.3 & 48.9 \\
    RSLAD \citep{zi2021rslad} & 79.9 & 69.3 & 55.6 & 20.8 & 55.9 & 39.8 & 25.5 & 47.9 \\
    DAD (Ours) & 79.6 & 69.9 & 53.2 & \textbf{22.4} & \textbf{65.1} & \textbf{46.1} & \textbf{31.8} & \textbf{51.7} \\
    \bottomrule
  \end{tabular}
\end{table}

\begin{table}[t]
\footnotesize
  \caption{Main results from pre-training on ImageNet-21K and fine-tuning on ImageNet-1K. All columns report top-1 accuracy except ImageNet-C which reports mean Corruption Error
(mCE) where lower is better. All models are ViT-B/16 trained on 224 x 224 images. We find that pretraining on ImageNet-21K results in larger robustness improvements.}
  \label{tab-im21k}
  \vspace{0.1in}
  \centering
    \begin{tabular}{l|cc|cc|ccc|c}
    \toprule
    & \multicolumn{2}{c|}{In-distribution} & \multicolumn{2}{c|}{Synthetic} & \multicolumn{3}{c}{Natural}\\
    \midrule
    Method & ImageNet & V2 & C ($\downarrow$) & Stylized & Rendition & Sketch & A & Avg \\
    \midrule
    ViT & 77.5 & 65.7 & 61.9 & 17.7 & 41.5 & 16.4 & 23.1 & 40.0 \\
    DAT \citep{mao2022dat} & \textbf{83.1} & \textbf{73.2}  & \textbf{43.6} & \textbf{24.8} & 55.2 & 41.7 & 36.7 & 53.0\\
    DAD (Ours) & 79.8 & 70.9 & 52.0 & 23.4 & \textbf{72.1} & \textbf{51.2} & \textbf{40.3} & \textbf{55.1} \\
    \bottomrule
    \end{tabular}
    \vspace{-0.1in}
\end{table}

\textbf{Distillation.} In Tab. \ref{tab-distillation} we compare DAD to knowledge distillation objectives. KD \citep{hinton2015kd} and DIST \citep{huang2022knowledge} are vanilla distillation approaches without data augmentation or AT. ARD \citep{goldblum2020ard} and RSLAD \citep{zi2021rslad} are defensive distillation objectives that use a adversarially robust teacher to encourage invariance to perturbations. For a fair comparison, we use CLIP-ViT-L/14 as the teacher and discretize the adversarial examples. We find that our distillation objective outperforms vanilla and defensive distillation objectives. We note that all methods can transfer robustness to the student, even methods without data augmentation.

\textbf{ImageNet-21k.} In Tab. \ref{tab-im21k} we show further gains in robustness from applying DAD to a ViT-B/16 pretrained on ImageNet-21K. We fine-tune this model with our method using only our method. Despite the baseline model performing worse than the variant trained with AugReg, DAD achieves higher relative and absolute gains in robustness. We hypothesize the larger training distribution equips the student with useful inductive biases that let it more easily learn the more out-of-distribution adversarial examples generated from CLIP. We note that the CLIP training set is still $\sim$28.2x larger so this does not contradict our theory, but it may also be beneficial to train or pretrain the student on a more diverse dataset for a smoother distillation process.

\subsection{Ablations}

\textbf{Other student and teacher architectures.} Although our method and theory is adapted for foundation models, we investigate its efficacy on other models and teachers in Tab. \ref{tab-architecture}. We consider a different large-scale teacher, CLIP-RN101, as well as teachers trained on ImageNet-1K that achieve out-of-distribution robustness through methods besides large-scale training, like Discrete ViT \citep{mao2022discrete} or ViT-B \citep{dosovitskiy2020image} trained with DAT \citep{mao2022dat}. We also consider smaller students like ResNet34 \citep{he2016deep} and ViT-S.

We find that distilling robustness in our setting depends on several factors, but most crucially, the robustness of the teacher. Like other distillation techniques, we find that our method can transfer representations between various student/teacher architectures. We find that our method is also susceptible to the model capacity gap, with lower clean accuracy on ResNet34 when distilling from CLIP than training from scratch. However, using CLIP results in the best performance on natural distribution shifts. Despite the more similar architecture, distilling from CLIP-RN101 across students is less effective than distilling from the more robust CLIP-ViT-L. We include similar results with vanilla knowledge distillation in Sec. \ref{sec-kd} of the Appendix.

\begin{table}[t]
\footnotesize
  \caption{Results on other student/teacher architectures, on ImageNet-1K. All experiments use DAD for the knowledge distillation objective and data augmentation. Using the best CLIP model as the teacher tends to result in the highest overall performance, but some teachers are better for some shifts.}
  \label{tab-architecture}
  \vspace{0.1in}
  \centering
  \begin{tabular}{llcccccccc}
    \toprule
    Student   & Teacher & IM     & A & C ($\downarrow$) & V2 & Rendition & Sketch & Stylized & Avg \\
    \midrule
    ViT-B & CLIP-RN101 \citep{radford2021learning} & \textbf{81.2} & 24.3 & \textbf{49.6} & 69.3 & 48.7 & 34.5 & 17.3 & 46.5 \\
    ViT-B & CLIP-ViT-L \citep{radford2021learning} & 79.6 & \textbf{31.8} & 53.2 & \textbf{69.9} & \textbf{65.1} & \textbf{46.1} & \textbf{22.4} & \textbf{51.7} \\
    \midrule
    RN50 & - & 76.1 & 0 & 76.7 & 63.2 & 36.1 & 24.0 & 7.4 & 32.9 \\
    RN50 & ViT-B + DAT \citep{mao2022dat} & \textbf{80.4} & \textbf{10.1} & \textbf{65.6} & \textbf{68.8} & 40.4 & 29.6 & 8.5 & 38.9 \\
    RN50 & DrViT-S \citep{mao2022discrete} & 78.5 & 5.5 & 67.4 & 66.2 & 42.0 & 30.1 & 11.5 & 38.1 \\
    RN50 & CLIP-RN101 \citep{radford2021learning} & 76.4 & 5.4 & 70.2 & 64.5 & 47.7 & 32.2 & 9.6 & 37.9 \\
    RN50 & CLIP-ViT-L \citep{radford2021learning} & 75.7 & 7.7 & 67.4 & 65.0 & \textbf{51.6} & \textbf{35.8} & \textbf{13.1} & \textbf{40.2} \\
    \toprule
    ViT-S & - & 77.8 & 11.9 & 63.9 & \textbf{66.0} & 36.9 & 25.3 & 12.0 & 38.0 \\
    ViT-S & ViT-B + DAT \citep{mao2022dat} & \textbf{77.8} & 11.9 & 67.1 & 66.0 & 36.9 & 25.3 & 12.0 & 37.5 \\
    ViT-S & CLIP-RN101 \citep{radford2021learning} & 73.4 & 9.0 & 65.2 & 62.1 & 38.8 & 23.9 & 12.0 & 36.3 \\
    ViT-S & CLIP-ViT-L \citep{radford2021learning} &  73.8 & \textbf{18.0} & \textbf{63.1} & 64.0 & \textbf{52.9} & \textbf{35.8} & \textbf{17.3} & \textbf{42.7} \\
    \midrule
    RN34 & - & 66.5 & 3.0 & 94.5 & 54.7 & 32.4 & 21.0 & 5.6 & 27.0 \\
    RN34 & RN50 + AugMix \citep{hendrycks2019augmix} & 68.9 & 1.8 & 82.9 & 56.2 & 37.2 & 24.1 & 9.9 & 30.7 \\
    RN34 & DrViT-S \citep{mao2022discrete} & 68.2 & 2.1 & 79.5 & 55.6 & 37.0 & 23.0 & 10.5 & 31.0 \\
    RN34 & ViT-B + DAT \citep{mao2022dat} & \textbf{69.2} & 2.2 & \textbf{79.0} & \textbf{56.6} & 38.7 & 25.0 & 11.0 & 32.0 \\
    RN34 & CLIP-RN101 \citep{radford2021learning} & 65.4 & 2.5 & 85.3 & 53.5 & 42.5 & 26.3 & 8.5 & 30.5 \\
    RN34 & CLIP-ViT-L \citep{radford2021learning} & 63.6 & \textbf{4.5} & 82.0 & 53.7 & \textbf{46.0} & \textbf{29.1} & \textbf{11.7} & \textbf{32.4} \\
    \bottomrule
  \end{tabular}
\end{table}

\textbf{Pure data augmentation.} We study in Tab. \ref{tab-nodistill} the effect of training on the DAD adversarial examples purely as a data augmentation technique, without distillation. Although DAD remains competitive, we find significant drops in performance, suggesting that it is difficult for the student to learn robust representations of these images on its own. However, we continue to observe improvements on natural distribution shifts, suggesting these samples are closer to CLIP's training distribution. However, training with DAD samples is significantly cheaper than DAT and Pyramid AT, making it more efficient in practice.

\begin{table}[t]
  \caption{Comparisons to data augmentation approaches. All columns report top-1 accuracy except ImageNet-C which reports mean Corruption Error
(mCE) where lower is better. All models are ViT-B/16 trained on 224 x 224 images. We remove the distillation terms and use DAD samples as a standard data augmentation. All methods are based on AugReg and use Mixup and Randaugment.}
  \label{tab-nodistill}
  \vspace{0.1in}
  \footnotesize
  \centering
  \begin{tabular}{lcccccccc}
    \toprule
    Model  & IM & A & C ($\downarrow$) & V2 & Rendition & Sketch & Stylized & Avg \\
    \midrule
    AugReg-ViT \citep{steiner2022train} & 79.9 & 19.0 & 54.5 & 67.9 & 39.5 & 29.2 & 16.6 & 42.5 \\
    + Pyramid AT \citep{herrmann2022pyramid} & \textbf{81.7} & 23.0 & 45.0 & 70.3 & 47.7 & 36.8 & 19.1 & 47.7 \\
    + DAT \citep{mao2022dat} & 81.5 & \textbf{30.2} & \textbf{44.7} & \textbf{70.8} & 47.3 & 34.8 & \textbf{23.1} & \textbf{49.0} \\
    \textbf{+ DAD (Ours)} & 80.2 & 24.6 & 53.0 & 69.8 & \textbf{51.7} & \textbf{36.9} & 22.1 & 47.5
  \\
    \bottomrule
  \end{tabular}
  \vspace{-0.1in}
\end{table}

\begin{table}[t]
    \centering
    \caption{Adversarial-training-based data augmentations and their training budget. Cost is based on training a ResNet50 from scratch for 100 epochs. While still more expensive than standard training, DAD is significantly cheaper than other techniques due to reusing precomputed adversarial examples.}
    \label{tab:cost}
    \vspace{0.1in}
    \begin{tabular}{lcc}
        \toprule
        Method & Attack Steps & Training Budget \\
        \midrule
        ImageNet & 0 & 1x \\
        Adversarial Training \citep{goodfellow2014explaining} & 10 & 11x \\
        AdvProp \citep{xie2020adversarial} & 5 & 7x \\
        Fast AdvProp \citep{mei2022fast} & 1 & 3x \\
        DAT \citep{mao2022dat} & 1 & 3.5x \\
        \textbf{DAD (Ours)} & 1 & 2x \\
        \bottomrule
    \end{tabular}
    \vspace{-0.1in}
\end{table}

\textbf{Computational cost analysis.} Since DAD uses adversarial examples generated from a frozen teacher, there is no need to regenerate them during training. This amortizes the otherwise significant cost of adversarial training. We compare the cost of DAD with other adversarial data augmentation approaches in Tab. \ref{tab:cost}. By avoiding the need to continuously generate new adversarial examples, the only remaining cost for DAD is training on a larger dataset, making it cheaper than similar methods. 

Additional ablations on choice of generative model, use of gradients, and transfer to adversarial robustness can be found in Sec. \ref{sec-ablations} in the Appendix.

\section{Conclusion and limitations}

We conduct the first study on distilling out-of-distribution robustness. We develop a framework for the use of foundation models in this setting and empirically and theoretically validate their advantages as a teacher. We propose discrete adversarial distillation (DAD) which uses the discrete adversarial examples of the teacher as a more diverse data augmentation and directly distill its most diverse representations. However, we find that DAD tends to be biased towards the performance of the CLIP teacher, exhibiting improvements mostly on natural distribution shifts. In practice, these shifts tend to be the most useful, and with the small computational cost of using DAD, we encourage practitioners to adopt it when training small models. We hope the development and release of improved foundation models and generative models will further demonstrate the effectiveness of our method.

We encourage further work to understand the limitations of machine vision models in out-of-distribution settings. More robust models carry the potential risk of automation bias, i.e., an undue trust in vision models. However, even if models are robust against corruptions in finite out-of-distribution datasets, they might still quickly fail on the massive space of semantic transformations in real-world data. Understanding under what conditions model decisions can be deemed reliable is still an open research question. 

\section*{Acknowledgements}

This work was supported in part by NSF Grant 2106825 and NIFA Award 2020-67021-32799.

{\small
\bibliographystyle{plain}
\bibliography{neurips_2023}
}

\newpage

\section{Appendix}
\appendix

The appendix is organized as follows. First, in Sec.~\ref{sec-kd}, we show additional results on using the original knowledge distillation objective. In Sec. \ref{sec-ablations} we show the results of additional ablations on the generative model, use of gradients, and transfer to adversarial robustness. In Sec. \ref{sec-implementation} we provide additional hyperparameter and implementation details. In Sec. \ref{sec-charts} we show visualizations of DAD and the diversity of its data augmentation to support our theory. In Sec. \ref{sec-proof} we provide full proofs from our theory. Finally, in Sec. \ref{sec-visualizations} we provide visualizations of generated DAD samples compared to standard and DAT samples.

\section{Additional results on vanilla knowledge distillation}
\label{sec-kd}

\begin{table}[h]
  \caption{Knowledge distillation can improve robustness. The teacher is CLIP-ViT-L/14 @ 224px \citep{radford2021learning} We use the original knowledge distillation objective \citep{hinton2015kd}. ViT-B \cite{dosovitskiy2020image} and ViT-S \cite{dosovitskiy2020image} are trained with AugReg \cite{steiner2022train}. Top half of the table is the original performance. Bottom half is the distilled performance. We find that distilling from CLIP can transfer robustness, even on in-distribution data.}
  \label{tab-kd}
  \vspace{0.1in}
  \centering
  \begin{tabular}{lccccccc}
    \toprule
    Model     & ImageNet     & A & C ($\downarrow$) & V2 & Rendition & Sketch & Stylized \\
    \midrule
    CLIP \citep{radford2021learning} & 79.9 & 64.2 & 60.2 & 72.9 & 87.7 & 61.6 & 18.5 \\
    ViT-B \citep{steiner2022train} & 79.9 & 19.0 & 54.5 & 67.9 & 39.5 & 29.2 & 16.6 \\
    ViT-S & 77.8 & 11.9 & 63.9 & 66.0 & 36.9 & 25.3 & 12.0 \\
    ResNet50 & 76.1 & 0.0 & 76.7 & 63.2 & 36.1 & 24.1 & 7.4 \\
    ResNet34 &  66.5 & 3.0 & 94.5 & 54.7 & 32.4 & 21.0 & 5.6 \\
    \toprule
    ViT-B & 78.6 & 16.0 & 61.5 & 67.2 & 51.5 & 34.7 & 16.2 \\
    ViT-S & 79.3 & 18.1 & 59.1 & 68.8 & 45.9 & 30.6 & 14.3 \\
    ResNet50 & 77.8 & 7.4 & 69.0 & 67.6 & 47.0 & 32.3 & 8.7 \\
    ResNet34 & 74.5 & 3.6 & 77.1 & 63.2 & 41.2 & 28.5 & 9.3 \\
    \midrule
    Average Change & +2.1 & +2.8 & -5.7 & +3.75 & +10.2 & +6.63 & +1.73
  \\
    \bottomrule
  \end{tabular}
\end{table}

\begin{table}[h]
\footnotesize
  \caption{Results on other student/teacher architectures with the original KD objective, on ImageNet-1K. Robustness can be distilled from a variety of robust teachers.}
  \label{tab-kd2}
  \vspace{0.1in}
  \centering
  \begin{tabular}{llcccccccc}
    \toprule
    Model   & Teacher & IM     & A & C ($\downarrow$) & V2 & Rendition & Sketch & Stylized \\
    \midrule
    RN50 & ViT-B + DAT & 80.0 & 8.1 & 66.3 & 68.5 & 40.9 & 29.4 & 8.6 \\
    RN50 & DrViT-S & 79.3 & 8.2 & 67.4 & 68.4 & 41.7 & 30.0 & 8.9 \\
    \midrule
    RN34 & RN50 + AugMix & 72.6 & 1.9 & 80.2 & 61.5 & 37.9 & 25.6 & 8.6 \\
    RN34 & DrViT-S & 74.2 & 2.5 & 77.4 & 62.1 & 38.2 & 25.4 & 8.7 \\
    RN34 & ViT-B + DAT & 74.3 & 2.5 & 77.1 & 62.3 & 37.8 & 25.4 & 8.4 \\
    RN34 & CLIP-RN101 & 72.4 & 3.9 & 79.8 & 61.2 & 45.4 & 29.8 & 8.5 \\
    \bottomrule
  \end{tabular}
\end{table}

In Tab. \ref{tab-kd} we find that surprisingly, distilling from CLIP on only in-distribution data is able to transfer robust representations, but is generally outperformed by DAD. This works especially well on smaller models, like ResNet34. In fact, it can also improve clean accuracy compared to training from scratch, for all the models we test except ViT-B. In Tab. \ref{tab-kd2}, we find that distilling from CLIP generally results in the highest average robust performance, especially for natural distribution shifts. However, any robust teacher can distill robustness in this setting, including a ResNet50 trained with AugMix as the only robustness intervention. This matches our results for Tab. \ref{tab-architecture}.

\section{Additional ablations}
\label{sec-ablations}

\subsection{Choice of generative model}

\begin{table}[t]
    \centering
    \caption{We ablate the use of VQGAN by using Stable Diffusion with DAD. The results are significantly worse, indicating the need to use a image-to-image model to discretize adversarial examples.}
    \label{tab:generative_model}
    \vspace{0.1in}
    \begin{tabular}{lccccccc}
        \toprule
        & ImageNet & V2 & Rendition & Sketch & A & Avg \\
        \midrule
        Stable-Diffusion & 79.1 & 67.8 & 45.9 & 33.4 & 22.0 & 49.6 \\
        VQGAN & \textbf{79.6} & \textbf{69.9} & \textbf{65.1} & \textbf{46.1} & \textbf{31.8} & \textbf{69.5} \\
        \bottomrule
    \end{tabular}
\end{table}

We use VQGAN \cite{esser2021taming} as our generative model of choice due to its nature as a image-to-image model, making it suitable as a discretizer. To justify this, we also experiment with Stable Diffusion \cite{rombach2021highresolution}, a text-to-image generative model. We use the generic prompt "A photo of an {object}". We observe in Tab. \ref{tab:generative_model} a significant decrease in performance when trained using DAD compared to VQGAN. This suggests the importance of using a discretizer for DAD. Perhaps modifying the text prompt for could boost performance and be an interesting avenue for future work, especially since CLIP also requires a text prompt.

\subsection{Use of gradients}

\begin{table}[t]
    \centering
    \caption{VQGAN sampling baseline comparison. To show the importance of using model gradients to discover a diverse data augmentation, we sample from the VQGAN without optimizing for a perturbation. The results are significantly worse than DAD.}
    \label{tab:vqgan_comparison}
    \vspace{0.1in}
    \begin{tabular}{lcccccc}
        \toprule
        & ImageNet & V2 & R & Sketch & A & Avg \\
        \midrule
        VQGAN - Sample & \textbf{80.9} & \textbf{70.1} & 49.3 & 34.9 & 24.0 & 51.8 \\
        VQGAN - Grad   & 79.6 & 69.9 & \textbf{65.1} & \textbf{46.1} & \textbf{31.8} & \textbf{69.5} \\
        \bottomrule
    \end{tabular}
\end{table}

DAD is based on adversarial training and uses gradients to find the most diverse and useful data augmentations. To show the importance of using teacher gradients to generate adversarial examples, we implement a sampling-based baseline where we discretize images without the added perturbation. The results in Tab. \ref{tab:vqgan_comparison} are significantly worse than DAD, indicating the need to use gradients to discover diverse samples. This is also supported by our theoretical analysis that indicates more diverse adversarial examples are better for robustness. Higher in-distribution performance also suggests the samples are less diverse.

\subsection{Transfer to adversarial robustness}

\begin{table}[t]
    \centering
    \caption{Comparison of adversarial attack methods. We use pretrained models and attack with FGSM \cite{goodfellow2014explaining}, PGD \cite{madry2018towards}, and AutoAttack \cite{croce2020reliable}. Models trained on discretized adversarial examples are somewhat robust but fail on stronger attacks.}
    \label{tab:adversarial_comparison}
    \vspace{0.1in}
    \begin{tabular}{lccc}
        \toprule
        Method       & FGSM & PGD & AutoAttack \\
        \midrule
        ResNet50     & 23.5 & 1.0 & 0.0 \\
        ResNet50 DAT & 33.0 & 5.9 & 0.0 \\
        ResNet50 DAD & \textbf{43.5} & \textbf{12.6} & 0.0 \\
        \midrule
        ViT-B        & 49.4 & 24.7 & 0.0 \\
        ViT-B - DAT  & \textbf{64.9} & \textbf{26.2} & 0.0 \\
        ViT-B - DAD  & 47.2 & 25.0 & 0.0 \\
        \bottomrule
    \end{tabular}
\end{table}

Although we center our study on out-of-distribution robustness, it is natural to examine the effect of DAD on adversarial robustness due to the use of adversarial training. In Tab. \ref{tab:adversarial_comparison} we attack our trained models with adversarial attacks of various difficulty. We observe a small improvement in adversarial robustness for simpler attacks, but neither DAT or DAD is robust to AutoAttack. This is because the perturbation is discretized and no longer represents the original adversarial example. We observe that DAT is stronger than DAD for ViT-B. Unlike out-of-distribution robustness, since adversarial robustness is based on perturbations generated with gradients from the base model, DAT models are trained on images closer to these perturbations than DAD models (which were trained on perturbations generated with CLIP gradients). However, for ResNet50, DAD is better even for adversarial robustness as distillation is able to help smaller capacity models learn discrete adversarial examples. We also observe higher ResNet50 performance in general in Tab. \ref{table-main}.

\section{Implementation details}
\label{sec-implementation}
We adopt official hyperparameter settings for a fair comparison for our baselines. For knowledge distillation, we use a temperature of $t = 4$ for all models and $a = 0.5$, following \citep{tian2019crd}. For DAD, we also weight the second KL-divergence term by $a$. All ViT models are trained with the AugReg \cite{steiner2022train} hyperparameter and data augmentation configurations.

Following \citep{mao2022dat}, we use the pretrained VQGAN weights from the official GitHub \footnote{https://github.com/CompVis/taming-transformers}. The VQGAN with f = 8, d = 4 and K = 16384 is used for main experiments.

We use one iteration for the adversarial attack, and an attack learning rate of 0.1. 

For the DAT + DAD variant, we add an additional cross entropy loss term with the student adversarial example to the overall training objective, and weight by $a$. 

We conduct all of our experiments on 8 32GB NVIDIA V100 GPUs.

\section{Wasserstein distance comparisons}
\label{sec-charts}

To justify our theoretical framework and empirical results we investigate the relationship between Wasserstein distance and model performance. We use a pretrained ResNet50 and calculate Wasserstein distance from batch norm statistics on different distributions using 1000 mini-batches. These distributions are data augmentations generated with the respective methods. In \ref{fig:own_distributions} we find that DAD tends to have better performance the larger the distribution shift. This is likely due to the help of distillation letting the model learn robust representations. Additionally, in \ref{fig:relative_distributions}, we find that relative to the Wasserstein distance between clean ImageNet images and a distribution shift and baseline models, our method has higher performance.

\begin{figure}[!h]
     \centering
     \begin{subfigure}[b]{0.47\textwidth}
         \centering
         \includegraphics[width=\textwidth]{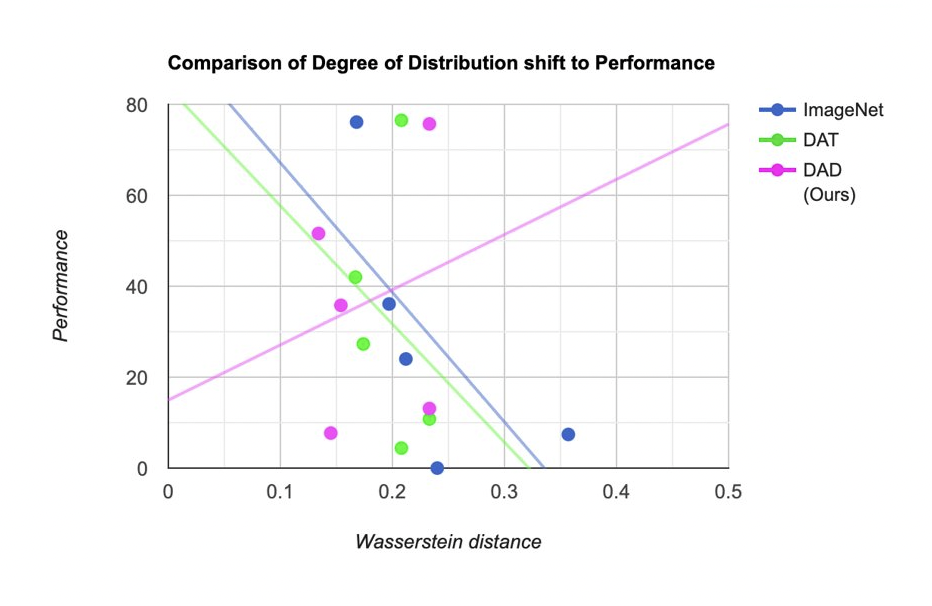}
         \caption{The relationship between Wasserstein distance and performance. DAD is the only method that improves performance when the generated data augmentation is more diverse.}
         \label{fig:own_distributions}
     \end{subfigure}
     \hfill
     \begin{subfigure}[b]{0.47\textwidth}
         \centering
         \includegraphics[width=\textwidth]{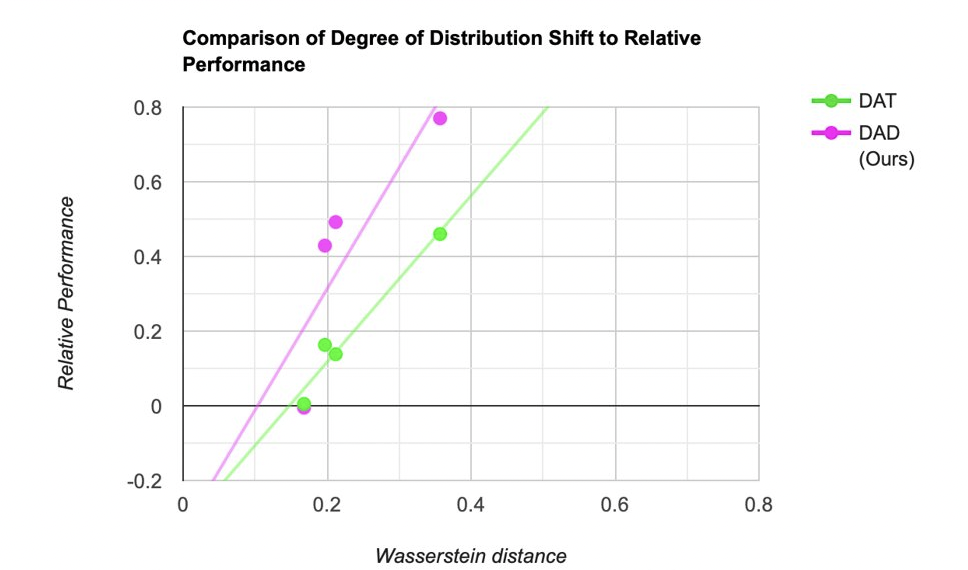}
         \caption{The relative relationship between Wasserstein distance and performance to the baseline model. DAD has stronger performance compared to DAT over the standard model.}
         \label{fig:relative_distributions}
     \end{subfigure}
\end{figure}

\section{Proofs}
\label{sec-proof}

\textbf{Proof of Lemma 3.1.} 

\begin{proof}
    Variational divergence $tv$ measures the divergence between distributions, where $\mathcal{B}$ is the set of measurable subsets in $P$ and $P'$  
    \[  
        tv(P, P') = 2 \sup_{B \in \mathcal{B}} \left| \Pr_{P}[B] - \Pr_{P'}[B] \right|
    \]
    \begin{align*}
        r(P') &= r(P') + r(P) - r(P) \\
        &\quad \leq r(P) + \left| r(P') - r(P) \right| \\
        &\quad \leq r(P) + \int \left| \delta(x) - \delta'(x) \right| dx \\
        &\quad \leq r(P) + tv(P, P').
    \end{align*}
    Following \cite{CHAE2020108771} and from Assumption 2, we can bound total variation with Wasserstein's distance. We take $K: \mathbb{R} \rightarrow \mathbb{R}$, a kernel satisfying a suitable moment condition, so for any coupling of $P$ and $P'$,
    \begin{align*}
        & \|\ K * p - K * p'\|_{\epsilon} \leq \sup_{s \neq t} \frac{\|T_s(K) - T_t(K)\|_r}{|s - t|} W_{\epsilon}(P, P').\\
    \end{align*}
    by the Jensen's inequality. Assume that $p, p' \in H^{\alpha}_1 (\mathbb{R})$. Let $\{\phi_m\}$ be the orthonormal system in $L^2([-1, 1])$ of Legendre polynomials defined by
    \[\phi_0(x) = 2^{-1/2}I(|x| \leq 1), \quad \phi_m(x) = \sqrt{\frac{2m + 1}{2}}\frac{1}{2^mm!}\frac{d^m}{dx^m}[(x^2 - 1)^m]I(|x| \leq 1),\]
    for $x \in \mathbb{R}$ and $m \in \mathbb{N}$. Define
    \[K (x) = \sum_{m=0}^{\alpha} \phi_m(0)\phi_m(x).\]
    Then, by Propositions 4.1.5 and 4.1.6 of \citep{gine2021mathematical},
    \[\|Kh \ast p - p\|_1 \preccurlyeq \|p\|^{H^{\alpha}_1} h^{\alpha} \quad \text{and} \quad \|Kh \ast p' - p'\|_1 \preccurlyeq \|p'\|^{H^{\alpha}_1} h^{\alpha} .\]

    Since $\max_{x\in[-1,1]} (|\phi_m(x)| \vee |\phi'_m(x)|)$ is bounded by a constant depending only on $m$, where $\phi'_m$ is the derivative of $\phi_m$ and $a \vee b$ is the maximum of $a$ and $b$,
    \begin{align*}
    \|T_s(\phi_m) - T_t (\phi_m)\|_1 &= \int |\phi_m(x - s) - \phi_m(x - t)|dx \\
    &\leq \underset{\{|x-s|\vee|x-t|\leq1\}}{\int} |\phi_m(x - s) - \phi_m(x - t)|dx + 2|s - t| \max_{x\in[-1,1]} |\phi_m(x)| \\
    &\leq 4|s - t| \max_{x\in[-1,1]} |\phi'_m(x)| + 2|s - t| \max_{x\in[-1,1]} |\phi_m(x)| \\
    &\preccurlyeq |s - t|.
    \end{align*}

    Thus,
    \[\|T_s(Kh) - T_t (Kh)\|_1 \leq \frac{1}{h} \sum_{m=0}^{\alpha} \phi_m(0) \int \left|\phi_m\left(\frac{x - s}{h}\right) - \phi_m\left(\frac{x - t}{h}\right)\right| dx \preccurlyeq \frac{|s - t|}{h},\]
    where $\phi_m(0)$ and $\alpha$ depends only on $\alpha$. By the triangle inequality, we have
    \begin{align*}
        \|p - p'\|_1 & \leq \|p - Kh \ast p\|_1 + \|Kh \ast p - Kh \ast p'\|_1 + \|Kh \ast p' - p'\|_1 \\
        & \preccurlyeq \|p\|^{H^{\alpha}_1} h^{\alpha} + \frac{W_1(P, P' )}{h} + \|p'\|^{H^{\alpha}_1} h^{\alpha} .
    \end{align*}
    If we take
    \[h = \left(\frac{W_1(P, P' )}{\|p\|^{H^{\alpha}_1} + \|p'\|^{H^{\alpha}_1}}\right)^{1/(\alpha+1)},\]
    the proof is complete.
\end{proof}

\textbf{Proof of Lemma 3.3.}

\begin{proof}
    Let $(Z, d_Z)$ be a metric space where $Z = X \times Y$ and $d_Z$ is defined as:
    \[
        d_Z(z, z_0) = d_Z((x, y), (x_0, y_0)) = (d_X(x, x_0) + d_Y(y, y_0))
    \]
    where $d_X$ and $d_Y$ represent the metric in the feature space and label space, respectively. Then we can define the Wasserstein distance between $P$ and $P^*$, 
    \[
        W_p(P, P^*) := \inf_{M \in \Gamma(P,P^*)} \mathbb{E}_{(z,z_0) \sim M} [d_Z(z, z_0)],
    \]
    where $\Gamma(P, P^*)$ denotes the collection of all measures on $Z \times Z$ with marginals $P$ and $P^*$ on the first and second factors, respectively.

    Following \citep{tu2019theoretical}, we use the minimax approach, considering the worse-case $P^*$ in the Wasserstein ball $\mathcal{B}^{w_p}_{\varepsilon}$ of radius $\varepsilon$ centered around $P$ where 
    \[
        \mathcal{B}^{w_p}_{\varepsilon}(P) = \{P^* : w_p(P, P^*) \leq \varepsilon\}
    \]

    Next we define a transport map $T : Z \to Z$ to push $P$ to $P^*$ as follows:
    \[
        z = (x, y) \mapsto (x^*, y)
    \]
    where $x^* = \arg\max_{x_0 \in P^*} l(\theta(x_0), y)$. By the definition of $d_Z$, $d_Z((x, y), (x^*, y)) = d_X(x, x^*)$.

    Finally, let $P^* = T_{\theta}\#P$, the pushforward of $P$ by $T_{\theta}$, then we have $R(P, \epsilon) = R(P^*)$. By the definition, we have
    \begin{align*}
    R(P, \epsilon) & = \mathbb{E}_{(x,y)\sim P}\left[\max~l(\theta(x_0), y)\right] \\
              & = \mathbb{E}_{(x,y)\sim P}\left[l(\theta(x^*), y)\right] \\
              & = \mathbb{E}_{(x,y)\sim P^*}\left[l(\theta(x), y)\right].
    \end{align*}
    Therefore, $r(P, \epsilon) = r(P^*)$. This lets us establish upper bound on the worst-case in the Wasserstein ball and bound the adversarial expected risk. Next we define the radius of the adversary constrained by $B$ as $\varepsilon_B := \sup_{x \in B} d_X(x, 0)$. For any hypothesis $h$ and the corresponding $P^* = T_{\theta}\#P$, we have $w(P, P^*) \leq \varepsilon_B$. By the definition of the Wasserstein distance, we have
    \begin{align*}
    w(P, P^*) & \leq \mathbb{E}_P[d_Z(Z, T_{\theta}(Z))] \\
                  & = \mathbb{E}_P[d_X(x, x^*)] \\
                  & \leq (\varepsilon_B),
    \end{align*} 
    where the last inequality uses the translation invariant property of $d_X$. Therefore, we have
    \[
        w(P, P^*) \leq \varepsilon_B.
    \]
    
\end{proof} 

\textbf{Proof of Lemma 3.4}

\begin{proof}
We use $P$ to denote the (at least) one distribution in the intersection of $\mP_1$ and $\mP_2$. 
    \begin{align*}
        \mathbb{E}_{\hat{P}}\big[w(D(\phi(\mP_1)),P') &- w(D(\phi(P)),P')\big] \\
        & \leq \mathbb{E}_{\hat{P}}\big[w(\mP_1,P_1)+w(\mP_1,P')] - \mathbb{E}_{\hat{P}}\big[w(P,P_2)-w(P,P')] \\
        &= \mathbb{E}_{\hat{P}}\big[w(\mP_1,P_1)-w(P,P_2)]
        + \mathbb{E}_{\hat{P}}w(\mP_1,P') + w(P,P') \\
        &= \mathbb{E}_{\hat{P}}\big[\sup_{P\in \mP}\rw(P,\epsilon) - \rw(P,\epsilon)]+ \mathbb{E}_{\hat{P}}w(\mP,P') + w(P,P') \\
        &\leq \sup_{P\in \mP_1}\rw(P,\epsilon) - \rw(P,\epsilon) + \mathbb{E}_{\hat{P}}\sup_{P\in \mP}w(P, P') + w(P,P') \\
        &= 2\mathbb{E}_{\hat{P}}\sup_{P\in \mP_1}w(P, P') + c ,
    \end{align*}
where $c$ is a positive constant. 

Similarly, we can have
\begin{align*}
    \mathbb{E}_{\hat{P}}\big[w(D(\phi(\mP_2)),P') - w(D(\phi(P)),P')\big] 
    \geq c - 2\mathbb{E}_{\hat{P}}\sup_{P\in \mP_2}w(P,P') .
\end{align*}
\end{proof}

\section{Visualizations}
\label{sec-visualizations}

\begin{figure}[h]
     \centering
     \begin{subfigure}[b]{0.17\textwidth}
         \centering
         \caption{Clean}
         \includegraphics[width=\textwidth]{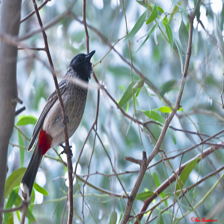}
         \label{fig:ex1}
     \end{subfigure}
     \hfill
     \begin{subfigure}[b]{0.17\textwidth}
         \centering
         \caption{Pixel AT}
         \includegraphics[width=\textwidth]{figures/input_7.png}
         \label{fig:adv1}
     \end{subfigure}
     \hfill
     \begin{subfigure}[b]{0.17\textwidth}
         \centering
         \caption{DAT}
         \includegraphics[width=\textwidth]{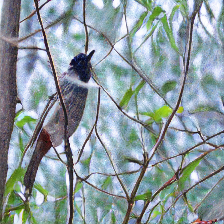}
         \label{fig:dat1}
     \end{subfigure}
     \hfill
     \begin{subfigure}[b]{0.17\textwidth}
         \centering
         \caption{DAD}
         \includegraphics[width=\textwidth]{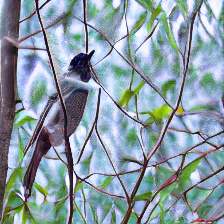}
         \label{fig:dad1}
     \end{subfigure}

     \vspace{\baselineskip}

     \begin{subfigure}[b]{0.17\textwidth}
         \centering
         \includegraphics[width=\textwidth]{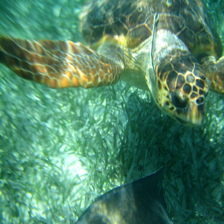}
         \label{fig:ex2}
     \end{subfigure}
     \hfill
     \begin{subfigure}[b]{0.17\textwidth}
         \centering
         \includegraphics[width=\textwidth]{figures/input_8.png}
         \label{fig:adv2}
     \end{subfigure}
     \hfill
     \begin{subfigure}[b]{0.17\textwidth}
         \centering
         \includegraphics[width=\textwidth]{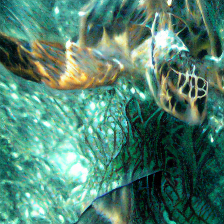}
         \label{fig:dat2}
     \end{subfigure}
     \hfill
     \begin{subfigure}[b]{0.17\textwidth}
         \centering
         \includegraphics[width=\textwidth]{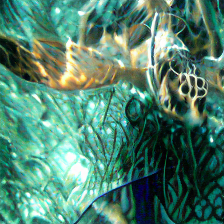}
         \label{fig:dad2}
     \end{subfigure}

     \vspace{\baselineskip}

     \begin{subfigure}[b]{0.17\textwidth}
         \centering
         \includegraphics[width=\textwidth]{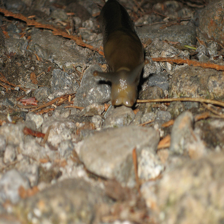}
         \label{fig:ex3}
     \end{subfigure}
     \hfill
     \begin{subfigure}[b]{0.17\textwidth}
         \centering
         \includegraphics[width=\textwidth]{figures/input_16.png}
         \label{fig:adv3}
     \end{subfigure}
     \hfill
     \begin{subfigure}[b]{0.17\textwidth}
         \centering
         \includegraphics[width=\textwidth]{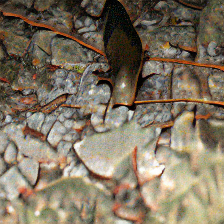}
         \label{fig:dat3}
     \end{subfigure}
     \hfill
     \begin{subfigure}[b]{0.17\textwidth}
         \centering
         \includegraphics[width=\textwidth]{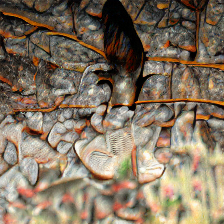}
         \label{fig:dad3}
     \end{subfigure}
     
     \caption{Additional visualizations of generated images. To highlight the difference, we use adversarial examples that are classified differently by the base model. Using CLIP in DAD results in a more diverse adversarial example than a vanilla ResNet50. Adversarial examples in pixel-space are imperceptible.}
\end{figure}

\end{document}